\documentclass{article}

\usepackage{microtype}
\usepackage{graphicx}
\usepackage{subcaption}
\usepackage{booktabs}
\usepackage{hyperref}

\usepackage[accepted]{icml2026}
\setcitestyle{numbers,square,sort&compress,comma}

\usepackage{amsmath}
\usepackage{amssymb}
\usepackage{mathtools}
\usepackage{amsthm}
\usepackage[capitalize,noabbrev]{cleveref}

\theoremstyle{plain}
\newtheorem{theorem}{Theorem}          
\newtheorem{proposition}{Proposition}  
\newtheorem{corollary}{Corollary}      

\theoremstyle{definition}
\newtheorem{assumption}{Assumption}    

\begin{document}
\onecolumn

\icmltitle{Robustness of Agentic AI Systems via \\ Adversarially-Aligned Jacobian Regularization}

\vskip 0.15in

\begin{center}
{\fontsize{11.25pt}{13.5pt}\bfseries\selectfont
Furkan Mumcu,
Yasin Yilmaz
}

{\fontsize{11.25pt}{13.5pt}\selectfont
University of South Florida
}

{\tt\small
\{furkan, yasiny\}@usf.edu
}
\end{center}

\vskip 0.25in

\begin{abstract} 
As Large Language Models (LLMs) transition into autonomous multi-agent ecosystems, robust minimax training becomes essential yet remains prone to instability when highly non-linear policies induce extreme local curvature in the inner maximization. Standard remedies that enforce global Jacobian bounds are overly conservative, suppressing sensitivity in all directions and inducing a large \emph{Price of Robustness}. We introduce \emph{Adversarially-Aligned Jacobian Regularization} (AAJR), a trajectory-aligned approach that controls sensitivity strictly along adversarial ascent directions. We prove that AAJR yields a strictly larger admissible policy class than global constraints under mild conditions, implying a weakly smaller approximation gap and reduced nominal performance degradation. Furthermore, we derive step-size conditions under which AAJR controls effective smoothness along optimization trajectories and ensures inner-loop stability. These results provide a structural theory for agentic robustness that decouples minimax stability from global expressivity restrictions.
\end{abstract}
\section{Introduction}
\label{sec:introduction}

The deployment of Large Language Models (LLMs) is rapidly shifting from isolated, single-turn interactions to autonomous, multi-agent ecosystems. In these environments, agents must not only optimize for localized tasks but also maintain stability against adversarial shifts, competing objectives, and system-level congestion. To guarantee worst-case performance and system robustness, training these agents can naturally be formulated as a minimax optimization problem \cite{madry2017towards, xhonneux2024efficient}. However, while minimax frameworks provide rigorous theoretical guarantees in convex settings, scaling them to highly non-linear systems like deep neural networks introduces profound optimization challenges.

A primary bottleneck in training robust agentic systems via gradient-based minimax learners is the instability of Gradient Descent-Ascent (GDA). Because deep neural networks are highly expressive, unconstrained internal propagation dynamics can allow the inner maximization loop (the adversary) to encounter regions of extreme local curvature, leading to limit cycles or divergence. To secure convergence, traditional stabilization methods such as spectral normalization or standard adversarial training, implicitly or explicitly enforce a global bound on the network's local Lipschitz constant, typically by limiting the spectral norm of the \emph{state-action Jacobian} across the state domain.

While such global constraints can stabilize the minimax learner, they introduce a trade-off commonly described as the \emph{Price of Robustness} \cite{bertsimas2004price}. By globally choking sensitivity, the model's admissible hypothesis class is severely restricted. This strictly increases the approximation gap relative to the unconstrained policy class, embedding a structural Price of Robustness in exchange for worst-case stability. Existing literature often treats this trade-off as an unavoidable consequence of adversarial robustness \cite{tsipras2018robustness, zhang2019theoretically}.

In this work, we challenge the necessity of this strict trade-off. We argue that global Jacobian constraints are inherently overly pessimistic: the inner maximization loop does not exploit the entire state space uniformly, but instead follows localized ascent trajectories that expose directions of maximum vulnerability. Consequently, suppressing sensitivity in all orthogonal, task-relevant directions can be mathematically unnecessary for GDA stability and destructive to nominal expressivity.

This limitation becomes particularly pronounced in autonomous agentic systems operating in interactive environments. Global sensitivity constraints have historically provided reasonable robustness–utility trade-offs for standard predictive models. Passive predictors can tolerate globally restricted input sensitivity because their outputs do not directly influence future states. In contrast, agentic systems must continuously adapt their actions within dynamic, shared environments. Enforcing uniform global bounds on state sensitivity therefore restricts responsiveness across all directions of the state space, effectively shrinking the admissible policy class. As a result, the resulting \emph{Price of Robustness} becomes especially severe for agentic systems, where expressive and context-dependent behavior is essential for coordination and planning.

To bridge internal propagation dynamics and minimax stability, we introduce \emph{Adversarially-Aligned Jacobian Regularization} (AAJR). Rather than enforcing a rigid global Lipschitz bound, AAJR adaptively suppresses sensitivity strictly along adversarial ascent directions induced by the inner maximization procedure. This directional perspective decouples inner-loop stability from global expressivity restrictions. Our theory establishes (i) an expressivity gain via strict hypothesis class expansion under mild conditions, and (ii) stability of the inner maximization dynamics via trajectory-wise curvature control and explicit step-size conditions.

The main contributions of this paper are as follows:
\begin{itemize}
    \item \textbf{Bottleneck Formalization for Agentic Minimax Learning:}
    We formalize the expressivity versus stability tension in gradient-based minimax training for agentic systems by showing that global Jacobian control restricts the admissible policy class and induces a Price of Robustness in nominal risk.

    \item \textbf{Trajectory-Aligned Sensitivity Control:}
    We propose \emph{Adversarially-Aligned Jacobian Regularization} (AAJR), which suppresses state sensitivity only along adversarial ascent directions generated by the inner maximization, rather than enforcing a global Jacobian bound.

    \item \textbf{Expressivity Guarantee via Class Expansion:}
    We introduce a trajectory-adaptive hypothesis class induced by directional constraints and prove that the globally constrained class is strictly contained in it, implying a weakly smaller approximation gap and thus a reduced Price of Robustness relative to global sensitivity control.

    \item \textbf{Optimization Guarantees for Inner Maximization Stability:}
    Under our standing smoothness assumptions, we show that bounding adversarially-aligned Jacobian amplification yields an explicit bound on the \emph{effective smoothness} of the inner objective along projected gradient ascent iterates, and we derive step-size conditions ensuring stable inner-loop dynamics that avoid curvature-driven divergence.
\end{itemize}

\section{Preliminaries}
\label{sec:preliminaries}

In this section, we establish the notation and foundational concepts regarding empirical risk minimization, minimax optimization, and the mathematical properties governing sensitivity of non-linear systems.

\subsection{Notation and Multi-Agent Objective}
\label{subsec:notation}
For agent $i$, let $\mathcal{D}$ be a distribution over an environment state space $\mathcal{S} \subset \mathbb{R}^d$ and a peer action/context space $\mathcal{A}_{-i}$. We consider an autonomous agent parameterized by a deep neural network policy $\pi_\theta: \mathcal{S} \rightarrow \mathcal{A}$ with weights $\theta \in \Theta$, where $\mathcal{A}$ is the agent's action space. The standard goal of learning in this context is to find nominal parameters $\theta_{nom}^*$ that minimize the expected system-level loss (e.g., negative aggregate utility) $\mathcal{L}$:
\begin{equation}
\theta_{nom}^* = \arg\min_{\theta \in \Theta} \; \mathbb{E}_{(s,a_{-i}) \sim \mathcal{D}} \big[\mathcal{L}(\pi_\theta(s), a_{-i})\big].
\end{equation}
When $\mathcal{D}$ is unknown, the expectation is typically approximated by an empirical average over a finite sample of environment states and peer interactions.

\subsection{Minimax Optimization and GDA Dynamics}
\label{subsec:minimax}
In multi-agent environments requiring robustness against worst-case environmental shifts or adversarial perturbations $\delta \in \Delta$ (where $\Delta = \{\delta \in \mathbb{R}^d : \|\delta\|_p \le \epsilon\}$), the standard nominal objective is insufficient. A common robust objective is the following minimax problem:
\begin{equation}
\min_{\theta \in \Theta} \; \mathbb{E}_{(s,a_{-i}) \sim \mathcal{D}}\Big[ \max_{\delta \in \Delta} \mathcal{L}\big(\pi_\theta(s + \delta), a_{-i}\big) \Big].
\end{equation}
In this agentic setting, $s$ represents the agent's observation of the environment state (e.g., current resource availability) and $a_{-i}$ encapsulates the multi-agent context, including the observable actions or demands of peer agents. Crucially, an adversarial perturbation $\delta$ mathematically represents a systemic shock to the shared state (e.g., a sudden, destabilizing spike in peer demand) rather than a standard text-level adversarial prompt. Correspondingly, $\mathcal{L}$ is a system-level objective induced by the agent's behavior, such as negative aggregate utility $-\mathcal{U}_{sys}(\pi_\theta(s), a_{-i})$ or system instability. Thus, the inner maximization corresponds to worst-case state shifts that degrade overall system outcomes.

This objective is typically optimized using Gradient Descent-Ascent (GDA) or its variants (e.g., Projected Gradient Descent for the inner loop) \cite{nouiehed2019solving, sinha2018certifying, xian2021faster}. The inner maximization step plays the role of the adversary or worst-case environment, taking projected gradient ascent steps with respect to $\delta$ (equivalently, with respect to the perturbed state $s+\delta$). The outer minimization step plays the role of the agent, taking gradient descent steps with respect to the policy parameters $\theta$.

\subsection{Lipschitz Continuity and the Policy Jacobian}
\label{subsec:lipschitz}
For a non-linear system such as a deep neural network policy, the behavior of the inner maximization loop is governed by how sensitively the agent's actions respond to small state variations. A policy $\pi_\theta$ is $L$-Lipschitz continuous with respect to the $\ell_2$-norm if there exists a constant $L \ge 0$ such that for all $s_1, s_2 \in \mathcal{S}$:
\begin{equation}
\|\pi_\theta(s_1) - \pi_\theta(s_2)\|_2 \le L \|s_1 - s_2\|_2.
\end{equation}

If $\pi_\theta$ is differentiable, then for any convex set $\mathcal{S}$ one has the standard bound
\begin{equation}
\sup_{s \in \mathcal{S}} \|J(\theta,s)\|_2 \;\le\; L,
\end{equation}
where $J(\theta,s)=\nabla_s \pi_\theta(s)$ is the state-action Jacobian. In particular, bounding $\sup_{s \in \mathcal{S}} \|J(\theta,s)\|_2$ provides a sufficient condition for $\ell_2$-Lipschitz continuity on $\mathcal{S}$, which mathematically limits the extent to which small environmental shocks cascade into massive shifts in the agent's behavior.
\section{Problem Formulation}
\label{sec:problem_formulation}

We formalize the central bottleneck in robust learning for agentic systems trained via gradient-based min-max procedures: ensuring stability of the inner maximization dynamics without globally restricting the policy class and incurring a large nominal performance penalty.

\subsection{Stability Requirement and Global Sensitivity Control}
\label{subsec:stability}
We study robust training objectives of the form
\begin{equation}
\min_{\theta \in \Theta}\; \mathbb{E}_{(s,a_{-i})\sim\mathcal{D}}
\Big[ \max_{\delta \in \Delta} \mathcal{L}\big(\pi_\theta(s+\delta), a_{-i}\big) \Big],
\end{equation}
typically optimized by Gradient Descent-Ascent (GDA) or Projected Gradient Ascent (PGA/PGD) in the inner loop. In highly non-linear policies, the inner maximization can exhibit oscillation or divergence when local curvature along the ascent trajectory is large, making the overall min-max training unstable. In agentic settings, such instability can manifest as large swings in an agent's actions under small state shifts, potentially cascading into system-level failures.

A common stabilization approach enforces a global bound on state sensitivity by constraining the operator norm of the state-action Jacobian:
\begin{equation}
\sup_{s \in \mathcal{S}} \| J_\theta(s) \|_2 \le \gamma,
\label{eq:global_jacobian_bound}
\end{equation}
where $J_\theta(s) \coloneqq \nabla_s \pi_\theta(s)$ and $\gamma>0$ is a global sensitivity budget. Such bounds are sufficient (though generally conservative) for limiting worst-case amplification of state perturbations and, in turn, stabilizing gradient-based inner maximization.

\subsection{Globally Constrained Policy Class}
\label{subsec:restricted_class}
Let $\mathcal{F}$ denote the unconstrained policy class. The global constraint~\eqref{eq:global_jacobian_bound} induces the restricted class
\begin{equation}
\mathcal{F}_\gamma
\;=\;
\Big\{
\pi \in \mathcal{F} : \sup_{s \in \mathcal{S}} \|J_\pi(s)\|_2 \le \gamma
\Big\}.
\end{equation}
As robustness requirements increase (e.g., larger perturbation radius $\epsilon$ in $\Delta$), practical procedures often require stronger sensitivity control, which strictly reduces the volume of the admissible class $\mathcal{F}_\gamma$ and mathematically forces a larger lower bound on the nominal approximation gap.

\subsection{Price of Robustness}
\label{subsec:price_of_robustness}
We quantify this degradation via the nominal risk
\begin{equation}
\mathcal{R}_{\mathrm{nom}}(\pi)
=
\mathbb{E}_{(s,a_{-i})\sim\mathcal{D}}
\big[\mathcal{L}(\pi(s),a_{-i})\big],
\end{equation}
and define the \emph{Price of Robustness} induced by global sensitivity control as the approximation gap
\begin{equation}
\mathcal{T}(\gamma)
=
\inf_{\pi \in \mathcal{F}_\gamma}\mathcal{R}_{\mathrm{nom}}(\pi)
-
\inf_{\pi \in \mathcal{F}}\mathcal{R}_{\mathrm{nom}}(\pi).
\end{equation}
Intuitively, smaller $\gamma$ yields a more restrictive class $\mathcal{F}_\gamma$ and tends to increase $\mathcal{T}(\gamma)$.

\paragraph{Core bottleneck.}
Global Jacobian control can stabilize the inner maximization but may be overly pessimistic, suppressing sensitivity in directions that are irrelevant to adversarial ascent and thereby increasing $\mathcal{T}(\gamma)$. This motivates seeking stability guarantees that depend only on \emph{trajectory-relevant} directions induced by the inner maximization dynamics.


\subsection{Standing Assumptions}  
\label{subsec:assumptions}
We state the conditions under which we develop our theoretical guarantees.

\begin{assumption}[Loss smoothness]\label{assump:loss_smooth} For each multi-agent context $a_{-i}$, the function $z \mapsto \mathcal{L}(z, a_{-i})$ is $L_{\mathcal{L}}$-smooth in the $\ell_2$ norm. That is, for all actions $z_1, z_2 \in \mathcal{A}$, the gradient $\nabla_z \mathcal{L}$ satisfies the Lipschitz condition:
\begin{equation}
    \|\nabla_z \mathcal{L}(z_1, a_{-i}) - \nabla_z \mathcal{L}(z_2, a_{-i})\|_2 \le L_{\mathcal{L}} \|z_1 - z_2\|_2
\end{equation}

\end{assumption}

\begin{assumption}[Differentiable policy]\label{assump:policy_diff}
The policy $\pi_\theta$ is differentiable in $s$ for all $\theta\in\Theta$ and $s\in\mathcal{S}$, with Jacobian $J_\theta(s)=\nabla_s \pi_\theta(s)$.
\end{assumption}

\begin{assumption}[Bounded second-order term along PGA iterates]\label{assump:second_order}Along the line segments between the inner-loop PGA iterates $\{\delta_t\}_{t=0}^{K}$, the policy's second-order contribution to the inner objective curvature is bounded by a constant $C\ge 0$. Specifically, letting $z = \pi_\theta(s+\delta)$, we assume the spectral norm of the residual Hessian term satisfies:$$ \left\| \sum_j \frac{\partial \mathcal{L}(z, a_{-i})}{\partial z_j} \nabla_\delta^2 \pi_{\theta,j}(s+\delta) \right\|_2 \le C $$for all $\delta \in [\delta_t, \delta_{t+1}]$ and $t = 0, \dots, K-1$.\end{assumption}

\begin{assumption}[Compact perturbation set and finite inner steps]\label{assump:compact}
The perturbation set $\Delta$ is compact and convex, and the inner maximization uses $K<\infty$ projected gradient steps with step size $\eta>0$.
\end{assumption}

These assumptions are standard in nonconvex minimax analysis \cite{lin2020gradient, madry2017towards, wang2021adversarial} and allow us to relate stability of the inner maximization to Jacobian-controlled propagation dynamics along adversarial trajectories.

\section{Directional Jacobian Constraints for Structural Agentic Robustness}
\label{sec:methodology}

Section~\ref{sec:problem_formulation} showed that stabilizing minimax learning via a global Jacobian bound can be overly conservative: it restricts the hypothesis class and can increase the Price of Robustness $\mathcal{T}(\gamma)$. We now introduce a \emph{trajectory-aligned} alternative that controls sensitivity only along adversarial ascent directions induced by the inner maximization dynamics.

\subsection{Adversarial Ascent Trajectories and Directional Sensitivity}
\label{subsec:directional_jacobian}

Fix a policy $\pi_\theta$ and a sample $(s,a_{-i})\sim \mathcal{D}$. Consider $K$ steps of projected gradient ascent in the perturbation space $\Delta$:

\begin{equation}
\delta_{0}=0,
\quad
\delta_{t+1} = \Pi_{\Delta}\!\Big(\delta_t + \eta \nabla_\delta \mathcal{L}(\pi_\theta(s+\delta_t), a_{-i})\Big),
\quad t=0,\ldots,K-1.
\label{eq:pga_update_clean}
\end{equation}
where $\eta>0$ is the step size and $\Pi_\Delta$ denotes projection onto $\Delta$. To evaluate the policy's sensitivity purely along this trajectory, we isolate the normalized ascent direction:
\begin{equation}
u_t
=
\frac{\nabla_{\delta}\,\mathcal{L}\!\left(\pi_\theta(s+\delta_t),a_{-i}\right)}
{\left\|\nabla_{\delta}\,\mathcal{L}\!\left(\pi_\theta(s+\delta_t),a_{-i}\right)\right\|_2
+\varepsilon_0},
\qquad \varepsilon_0>0.
\label{eq:ut_def_clean}
\end{equation}
By definition, $\|u_t\|_2 \le 1$. The resulting adversarial perturbation after $K$ steps is $\delta^\star \coloneqq \delta_K$. Our central object is the \emph{directional Jacobian amplification} along the ascent directions:
\begin{equation}
\left\|J_\theta(s+\delta_t)\,u_t\right\|_2. 
\label{eq:dir_amp}
\end{equation}
Instead of globally bounding $\|J_\theta(s)\|_2$, we impose a \emph{trajectory-aligned directional budget}:
\begin{equation}
\left\|J_\theta(s+\delta_t)\,u_t\right\|_2
\le
\gamma_{\mathrm{adv}},
\qquad t=0,\ldots,K-1,
\label{eq:dir_budget_traj_clean}
\end{equation}
where $\gamma_{\mathrm{adv}}>0$ controls amplification \emph{only} in directions that the inner maximization actually exploits.

\subsection{Adaptive Hypothesis Class and Price-of-Robustness Implications}
\label{subsec:expanded_class}

Let $\{\delta_t(s,a_{-i};\pi)\}_{t=0}^{K}$ be the deterministic PGA iterates generated by~\eqref{eq:pga_update_clean} applied to $\delta \mapsto \mathcal{L}(\pi(s+\delta),a_{-i})$, with directions $\{u_t(s,a_{-i};\pi)\}$ defined by~\eqref{eq:ut_def_clean}. We define the \emph{trajectory-adaptive} policy class as
\begin{equation}
\mathcal{F}_{\mathrm{ad}}(\gamma_{\mathrm{adv}})
=
\Big\{
\pi\in\mathcal{F} :
\left\|J_\pi(s+\delta_t)\,u_t(s,a_{-i};\pi)\right\|_2
\le
\gamma_{\mathrm{adv}},
\ \forall t=0,\ldots,K-1,\ 
\mathcal{D}\text{-a.e. }(s,a_{-i})
\Big\}.
\label{eq:F_adaptive_clean}
\end{equation}

Recall the globally constrained class
\begin{equation}
\mathcal{F}_\gamma
=
\Big\{\pi\in\mathcal{F} : \sup_{s\in\mathcal{S}} \|J_\pi(s)\|_2 \le \gamma \Big\}.
\label{eq:F_global_clean}
\end{equation}

\begin{proposition}[Global constraints imply directional constraints]
\label{prop:inclusion}
If $\pi\in \mathcal{F}_\gamma$ and $\gamma \le \gamma_{\mathrm{adv}}$, then $\pi \in \mathcal{F}_{\mathrm{ad}}(\gamma_{\mathrm{adv}})$.
\end{proposition}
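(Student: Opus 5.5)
The argument is a direct pointwise verification. Fix $\pi\in\mathcal{F}_\gamma$ with $\gamma\le\gamma_{\mathrm{adv}}$ and a sample $(s,a_{-i})$. Run the PGA recursion~\eqref{eq:pga_update_clean} for $\pi$ to obtain the deterministic iterates $\{\delta_t\}$ and the directions $\{u_t\}$ from~\eqref{eq:ut_def_clean}. For each $t\in\{0,\ldots,K-1\}$, the operator-norm inequality gives
\begin{equation*}
\|J_\pi(s+\delta_t)\,u_t\|_2 \;\le\; \|J_\pi(s+\delta_t)\|_2\,\|u_t\|_2 .
\end{equation*}
The normalization in~\eqref{eq:ut_def_clean} with $\varepsilon_0>0$ gives $\|u_t\|_2\le 1$. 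This holds even when the gradient vanishes, since then $u_t=0$. The remaining factor $\|J_\pi(s+\delta_t)\|_2$ is bounded by $\sup_{s'\in\mathcal{S}}\|J_\pi(s')\|_2\le\gamma\le\gamma_{\mathrm{adv}}$. Since this bound holds for every $t$ and every sample, it holds in particular $\mathcal{D}$-a.e. Hence $\pi$ satisfies~\eqref{eq:F_adaptive_clean}.

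The one step that needs care is the claim that the perturbed state $s+\delta_t$ lies in $\mathcal{S}$, which is the domain over which the supremum in~\eqref{eq:F_global_clean} is taken. The definitions do not guarantee this automatically, because $\delta_t\in\Delta$ only ensures the perturbation is small. I would handle it as follows. I would note that the Jacobian, and hence the directional constraint, is only evaluated where the policy is defined. Under Assumption~\ref{assump:policy_diff}, the differentiability is stated over the relevant states, so I would interpret $\mathcal{S}$ as containing the perturbed states $\mathcal{S}+\Delta$. If necessary, I would state this explicitly as the convention that the global bound is imposed on the perturbed domain. The conclusion then follows as above.

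With the domain convention fixed, the rest is a one-line inequality chain. No use of the smoothness assumptions or of Assumption~\ref{assump:second_order} is needed. This is also why the inclusion $\mathcal{F}_\gamma\subseteq\mathcal{F}_{\mathrm{ad}}(\gamma_{\mathrm{adv}})$ holds for every $K$, every $\eta$, and every $\varepsilon_0$.
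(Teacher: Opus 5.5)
Your proof is correct and matches the paper's argument: the operator-norm bound $\|J_\pi(s+\delta_t)u_t\|_2 \le \|J_\pi(s+\delta_t)\|_2\|u_t\|_2$, together with $\|u_t\|_2\le 1$ and $\sup\|J_\pi\|_2\le\gamma\le\gamma_{\mathrm{adv}}$, gives the directional constraint. Your extra remark that $s+\delta_t$ must lie in $\mathcal{S}$ is left implicit in the paper. It is in fact already forced, because the PGA recursion evaluates $\pi:\mathcal{S}\to\mathcal{A}$ at $s+\delta_t$, so your domain convention costs nothing.
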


\noindent
\textbf{Proof sketch.} For any unit vector $u$, $\|J_\pi(x)u\|_2 \le \|J_\pi(x)\|_2$. Since $\|u_t\|_2 \le 1$ by construction, the global bound implies~\eqref{eq:dir_budget_traj_clean} along the trajectory. \hfill $\square$

As a consequence, for $\gamma\le \gamma_{\mathrm{adv}}$ we have
\begin{equation}
\mathcal{F}_\gamma \subseteq \mathcal{F}_{\mathrm{ad}}(\gamma_{\mathrm{adv}}) \subseteq \mathcal{F}.
\label{eq:class_inclusion_clean}
\end{equation}
Define the corresponding approximation gaps:
\begin{equation}
\mathcal{T}(\gamma)
=
\inf_{\pi\in\mathcal{F}_\gamma}\mathcal{R}_{\mathrm{nom}}(\pi)
-
\inf_{\pi\in\mathcal{F}}\mathcal{R}_{\mathrm{nom}}(\pi),
\qquad
\mathcal{T}_{\mathrm{ad}}(\gamma_{\mathrm{adv}})
=
\inf_{\pi\in\mathcal{F}_{\mathrm{ad}}(\gamma_{\mathrm{adv}})}\mathcal{R}_{\mathrm{nom}}(\pi)
-
\inf_{\pi\in\mathcal{F}}\mathcal{R}_{\mathrm{nom}}(\pi).
\label{eq:approx_gaps_clean}
\end{equation}
The inclusion~\eqref{eq:class_inclusion_clean} implies
\begin{equation}
\mathcal{T}_{\mathrm{ad}}(\gamma_{\mathrm{adv}}) \le \mathcal{T}(\gamma).
\label{eq:gap_inequality_clean}
\end{equation}
In Section~\ref{sec:theory}, we provide conditions under which the inclusion $\mathcal{F}_\gamma \subsetneq \mathcal{F}_{\mathrm{ad}}(\gamma_{\mathrm{adv}})$ is strict, yielding a strictly improved robustness-expressivity trade-off.

\subsection{A Practical Surrogate: Adversarially-Aligned Jacobian Regularization}
\label{subsec:optimization_objective}

The constraint~\eqref{eq:dir_budget_traj_clean} motivates a tractable regularizer that penalizes directional Jacobian amplification along the inner-loop ascent trajectory. Given PGA iterates $\{\delta_t\}_{t=0}^{K}$ and ascent directions $\{u_t\}_{t=0}^{K-1}$, define
\begin{equation}
\mathcal{R}_{\mathrm{AAJR}}(\theta; s,a_{-i})
=
\frac{1}{K}\sum_{t=0}^{K-1}
\left\|J_\theta(s+\delta_t)\,\mathrm{stopgrad}(u_t)\right\|_2^2 .
\label{eq:aajr_traj_clean}
\end{equation}
We then consider the regularized robust objective
\begin{equation}
\min_{\theta\in\Theta}\;
\mathbb{E}_{(s,a_{-i})\sim\mathcal{D}}
\Big[
\max_{\delta\in\Delta}\;
\mathcal{L}\!\left(\pi_\theta(s+\delta),a_{-i}\right)
+\lambda\,\mathcal{R}_{\mathrm{AAJR}}(\theta; s,a_{-i})
\Big],
\label{eq:robust_obj_aajr_clean}
\end{equation}
where $\lambda\ge 0$ controls the strength of trajectory-aligned sensitivity suppression. The $\mathrm{stopgrad}(\cdot)$ operator indicates that gradients are not backpropagated through $\{u_t\}$, yielding a stable first-order surrogate analogous to standard adversarial training.

\paragraph{Agentic interpretation.}
In agentic systems, $\mathcal{L}$ captures system-level performance induced by an agent's actions in a multi-agent context (e.g., aggregate utility or stability). Although $\mathcal{R}_{\mathrm{AAJR}}$ acts on an individual agent's internal propagation through $J_\theta$, the directions $\{u_t\}$ are determined by ascent on the system loss $\mathcal{L}$. Thus AAJR suppresses sensitivity \emph{only} along directions that most degrade system outcomes under worst-case shifts, rather than globally across all state directions.
\section{Theoretical Guarantees}
\label{sec:theory}

We provide formal guarantees for trajectory-aligned directional Jacobian control. First, we show that replacing a global Jacobian constraint with a trajectory-aligned directional constraint expands the admissible policy class, yielding a (weakly) smaller approximation gap and thus a lower Price of Robustness. Second, under the standing assumptions of Section~\ref{subsec:assumptions}, we show that the directional constraint upper bounds the \emph{effective smoothness} of the inner maximization objective along the PGA iterates, which controls curvature growth and stabilizes the inner-loop dynamics.

\subsection{Expressivity: Class Expansion and Price-of-Robustness}
\label{subsec:class_expansion}

Recall the globally constrained class $\mathcal{F}_\gamma$ in~\eqref{eq:F_global_clean} and the trajectory-adaptive class $\mathcal{F}_{\mathrm{ad}}(\gamma_{\mathrm{adv}})$ in~\eqref{eq:F_adaptive_clean}.

\begin{theorem}[Class inclusion and strict expansion]
\label{thm:strict_inclusion_clean}
Fix $\gamma>0$ and choose $\gamma_{\mathrm{adv}}=\gamma$. Then
\[
\mathcal{F}_\gamma \subseteq \mathcal{F}_{\mathrm{ad}}(\gamma).
\]
Moreover, the inclusion is strict, $\mathcal{F}_\gamma \subsetneq \mathcal{F}_{\mathrm{ad}}(\gamma)$, whenever the ascent directions induced by the inner maximization do not span all directions on a set of positive $\mathcal{D}$-measure; concretely, if there exists a set $E$ with $\mathcal{D}(E)>0$ such that for all $(s,a_{-i})\in E$ and all $t\in\{0,\ldots,K-1\}$,
\[
u_t(s,a_{-i};\pi) \in U \subsetneq \mathbb{R}^d
\quad\text{for some fixed proper subspace }U,
\]
then there exists a policy $\pi\in \mathcal{F}_{\mathrm{ad}}(\gamma)$ with $\pi\notin \mathcal{F}_\gamma$.
\end{theorem}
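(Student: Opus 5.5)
The inclusion $\mathcal{F}_\gamma \subseteq \mathcal{F}_{\mathrm{ad}}(\gamma)$ follows directly from Proposition~\ref{prop:inclusion} with $\gamma_{\mathrm{adv}}=\gamma$. Explicitly, for $\pi\in\mathcal{F}_\gamma$, every $(s,a_{-i})$ and every $t$, we have $\|J_\pi(s+\delta_t)u_t\|_2\le \|J_\pi(s+\delta_t)\|_2\|u_t\|_2\le\gamma$. This uses $\|u_t\|_2\le 1$ from \eqref{eq:ut_def_clean}, together with the observation that $s+\delta_t$ should lie in $\mathcal{S}$ (or that the global sup is read over the perturbed domain). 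I would state that last convention explicitly, since it is the only subtlety in this half.

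For strictness, the plan is a direct construction. Pick a unit vector $v\in U^\perp$, which exists because $U$ is a proper subspace. Take any base policy $\pi_0$ whose trajectory directions lie in $U$ on $E$ and which satisfies the directional budget. Then perturb it only along $v$:
\[
\pi(x)=\pi_0(x)+ w\,\phi(\langle v,x\rangle),
\]
where $w\in\mathcal{A}$ is a fixed output direction and $\phi$ is a smooth scalar with $\sup|\phi'|$ large, so that $\|J_\pi(x)\|_2\ge |\phi'(\langle v,x\rangle)|\,\|w\|_2-\|J_{\pi_0}(x)\|_2>\gamma$ at some point. This gives $\pi\notin\mathcal{F}_\gamma$. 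For any $u\in U$ we get $J_\pi(x)u=J_{\pi_0}(x)u+w\,\phi'(\cdot)\langle v,u\rangle=J_{\pi_0}(x)u$. So, provided the ascent directions of $\pi$ itself still lie in $U$, the directional amplifications of $\pi$ are unchanged and remain at most $\gamma$.

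The main obstacle is self-consistency. The directions $u_t(s,a_{-i};\pi)$ depend on $\pi$, so adding the $v$-component could change $\nabla_\delta\mathcal{L}=J_\pi^\top\nabla_z\mathcal{L}$ and push the trajectory out of $U$. The hypothesis is also phrased loosely: on $E$ it controls the directions of some $\pi$, and off $E$ it says nothing. I would handle this by reading the hypothesis as holding for the constructed $\pi$ (or for a family containing it), and by choosing $w$ orthogonal to the relevant loss gradients, or taking $\mathcal{L}$-insensitive output coordinates, so that $J_\pi^\top\nabla_z\mathcal{L}$ is unchanged.

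Failing that, the cleanest route is to choose $\phi$ supported away from the perturbed trajectories $\{s+\delta_t\}$ for $\mathcal{D}$-a.e.\ sample, so the trajectories and directions of $\pi$ and $\pi_0$ coincide exactly. Since $\mathcal{F}_{\mathrm{ad}}$ only constrains behavior $\mathcal{D}$-a.e.\ along trajectories, while $\mathcal{F}_\gamma$ takes a supremum over all of $\mathcal{S}$, a large bump in a region never visited by trajectories already witnesses strictness. The subspace condition then serves to guarantee room in a direction $v$ when the visited set is not negligible. I would try this bump-localization argument first and fall back on the orthogonal-direction argument above when the trajectories are dense.
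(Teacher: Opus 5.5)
Your inclusion argument is the paper's. Your caveat that $s+\delta_t$ must lie in $\mathcal{S}$ (or that the supremum be read over the perturbed domain) is a real subtlety, and the paper also passes over it.

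For strictness you use the same mechanism as the paper: sensitivity placed only on $U^\perp$ is invisible to a budget along $U$ but not to the operator norm. The paper, however, works purely at the Jacobian level. It posits a policy whose Jacobian equals $\gamma P_U+MP_{U^\perp}$ with $M>\gamma$ on the relevant states (e.g.\ the linear policy $x\mapsto(\gamma P_U+MP_{U^\perp})x$, which tacitly takes $\mathcal{A}=\mathbb{R}^d$), and it simply asserts that this witness meets the directional budget. It never confronts the self-consistency issue you raise: for this $J$, the ascent direction $J^\top\nabla_z\mathcal{L}$ lies in $U$ iff $\nabla_z\mathcal{L}\in U$ along the trajectory. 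Nor does it handle samples outside $E$, where $\mathcal{F}_{\mathrm{ad}}(\gamma)$ still imposes the budget. Your first resolution, reading the subspace hypothesis as holding for the constructed policy, is therefore exactly the paper's implicit reading. Both arguments actually need that condition $\mathcal{D}$-a.e., not only on $E$.

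Under that reading your construction goes through with one repair. The claim that ``the directional amplifications are unchanged'' presumes that $\pi$ and $\pi_0$ generate the same trajectories. Rather than a budget along $\pi_0$'s own trajectories, you need $\|J_{\pi_0}(x)u\|_2\le\gamma$ for all $x$ and all $u\in U$ with $\|u\|_2\le 1$ (e.g.\ $\pi_0\in\mathcal{F}_\gamma$). That uniformity is precisely what the paper's choice $Ju=\gamma u$ on all of $U$ builds in.

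Your policy-level route buys an honest element of $\mathcal{F}$ for an arbitrary action space, and it makes trajectory invariance visible as the real crux. Your two fully rigorous fallbacks, however, establish strictness under hypotheses the theorem does not contain:
\begin{itemize}
\item An output direction $w$ along which $\mathcal{L}(\cdot,a_{-i})$ is constant. Then even a constant $\pi_0$ yields a witness: $g$ is constant, $u_t=0$ and $\delta_t=0$, while $\|J_\pi\|_2=|\phi'|\,\|w\|_2$ can be made as large as you like.
\item A region of $\mathcal{S}$ avoided by $\mathcal{D}$-a.e.\ trajectory. This cannot be arranged when the $s$-marginal of $\mathcal{D}$ charges every open subset of $\mathcal{S}$, because $\delta_0=0$ visits $s$ itself.
\end{itemize}
Neither fallback uses $U$ at all. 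For the theorem as stated, you should lead with the orthogonal-complement construction under the paper's reading, not with the bump argument.
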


\begin{proof}
\textbf{Inclusion.}
Let $\pi\in\mathcal{F}_\gamma$. Then for all $x\in\mathcal{S}$ and all unit vectors $v$,
$\|J_\pi(x)v\|_2 \le \|J_\pi(x)\|_2 \le \gamma$.
Since $\|u_t\|_2\le 1$ by construction, it follows that along any PGA trajectory,
$\|J_\pi(s+\delta_t)u_t\|_2 \le \gamma$ for all $t$, hence $\pi\in\mathcal{F}_{\mathrm{ad}}(\gamma)$.

\textbf{Strictness (construction).}
Assume the ascent directions lie in a fixed proper subspace $U\subsetneq \mathbb{R}^d$ on a set $E$ of positive measure. Let $P_U$ be the orthogonal projector onto $U$ and $P_{U^\perp}$ onto its orthogonal complement. Consider a (local) Jacobian of the form
\[
J = \gamma\,P_U \;+\; M\,P_{U^\perp},
\qquad \text{with } M>\gamma.
\]
Then for any $u\in U$ with $\|u\|_2\le 1$, we have $Ju=\gamma u$ and thus $\|Ju\|_2 \le \gamma$,
while $\|J\|_2 = M > \gamma$ because $J$ amplifies vectors in $U^\perp$ by a factor $M$.
Therefore, any policy whose Jacobian equals $J$ on the relevant states satisfies the directional constraints along all ascent directions (hence belongs to $\mathcal{F}_{\mathrm{ad}}(\gamma)$) but violates the global constraint (hence is not in $\mathcal{F}_\gamma$). This proves strict inclusion. 
\end{proof}

\begin{corollary}[Price-of-robustness ordering]
\label{cor:por_ordering}
For $\gamma_{\mathrm{adv}}=\gamma$, define $\mathcal{T}(\gamma)$ and $\mathcal{T}_{\mathrm{ad}}(\gamma)$ as in~\eqref{eq:approx_gaps_clean}. Then
\[
\mathcal{T}_{\mathrm{ad}}(\gamma) \le \mathcal{T}(\gamma).
\]
\end{corollary}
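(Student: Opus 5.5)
The corollary follows from the class inclusion of Theorem~\ref{thm:strict_inclusion_clean} together with monotonicity of the infimum under set inclusion. I will first invoke the inclusion part of Theorem~\ref{thm:strict_inclusion_clean}, which for $\gamma_{\mathrm{adv}}=\gamma$ gives $\mathcal{F}_\gamma \subseteq \mathcal{F}_{\mathrm{ad}}(\gamma) \subseteq \mathcal{F}$. The second inclusion holds directly by the definition~\eqref{eq:F_adaptive_clean}.

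The key step is the elementary fact that for any real-valued functional $\mathcal{R}_{\mathrm{nom}}$ on $\mathcal{F}$ and sets $A\subseteq B\subseteq\mathcal{F}$, one has $\inf_{\pi\in B}\mathcal{R}_{\mathrm{nom}}(\pi)\le \inf_{\pi\in A}\mathcal{R}_{\mathrm{nom}}(\pi)$. This holds because every value attained on $A$ is also attained on $B$, so any lower bound for the values on $B$ is also a lower bound for the values on $A$. Applying it with $A=\mathcal{F}_\gamma$ and $B=\mathcal{F}_{\mathrm{ad}}(\gamma)$ gives
\[
\inf_{\pi\in\mathcal{F}_{\mathrm{ad}}(\gamma)}\mathcal{R}_{\mathrm{nom}}(\pi)\;\le\;\inf_{\pi\in\mathcal{F}_\gamma}\mathcal{R}_{\mathrm{nom}}(\pi).
\]
I will then subtract the common term $\inf_{\pi\in\mathcal{F}}\mathcal{R}_{\mathrm{nom}}(\pi)$ from both sides, which yields $\mathcal{T}_{\mathrm{ad}}(\gamma)\le\mathcal{T}(\gamma)$ via~\eqref{eq:approx_gaps_clean}.

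The main obstacle, though a mild one, is making sure the subtraction is well defined. I will assume $\inf_{\pi\in\mathcal{F}}\mathcal{R}_{\mathrm{nom}}(\pi)$ is finite; this holds, for instance, if $\mathcal{L}$ is bounded below, as is implicit in the definitions of $\mathcal{T}$. Under that assumption the common subtracted term is a real number, so subtracting it preserves the inequality even when $\inf_{\pi\in\mathcal{F}_\gamma}\mathcal{R}_{\mathrm{nom}}$ equals $+\infty$, for example if $\mathcal{F}_\gamma$ is empty. I will also note that the strictness part of Theorem~\ref{thm:strict_inclusion_clean} only gives a strict inclusion of classes. It need not give a strict gap inequality, which is why the corollary asserts only the weak ordering.
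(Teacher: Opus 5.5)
Your proposal is correct and matches the paper's argument. The paper's proof is also just the inclusion $\mathcal{F}_\gamma \subseteq \mathcal{F}_{\mathrm{ad}}(\gamma)$ from Theorem~\ref{thm:strict_inclusion_clean} plus monotonicity of the infimum under set inclusion. Your finiteness caveat on $\inf_{\pi\in\mathcal{F}}\mathcal{R}_{\mathrm{nom}}(\pi)$ is a sensible refinement that the paper leaves implicit; note that $\mathcal{L}$ being bounded below only rules out $-\infty$, so you also need some $\pi\in\mathcal{F}$ with finite nominal risk.
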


\begin{proof}
Immediate from $\mathcal{F}_\gamma \subseteq \mathcal{F}_{\mathrm{ad}}(\gamma)$ and the definitions of the infima.
\end{proof}

\subsection{Stability: Trajectory-wise Effective Smoothness of the Inner Objective}
\label{subsec:stability}

Fix $(s,a_{-i})$ and define the inner objective
$$g(\delta) \coloneqq \mathcal{L}(\pi_\theta(s+\delta),a_{-i}).$$
Let $\{\delta_t\}_{t=0}^{K}$ be the PGA iterates in~\eqref{eq:pga_update_clean} with ascent directions $\{u_t\}$ in~\eqref{eq:ut_def_clean}. We now show that bounding directional Jacobian amplification along the update trajectory controls the directional curvature of $g$ \emph{along the iterates}.

\begin{theorem}[Trajectory-wise effective directional smoothness]
\label{thm:stability_pga_clean}
Under Assumptions~\ref{assump:loss_smooth}-\ref{assump:compact}, let $v_t = (\delta_{t+1}-\delta_t)/\|\delta_{t+1}-\delta_t\|_2$ be the normalized update direction (for $\delta_{t+1} \neq \delta_t$). Suppose additionally that the directional amplification constraint holds along the line segment between iterates:
$$\|J_\theta(s+\delta)\,v_t\|_2 \le \gamma_{\mathrm{adv}}, \qquad \forall \delta \in [\delta_t, \delta_{t+1}], \quad t=0,\ldots,K-1.$$
Then the inner objective $g$ has bounded directional curvature \emph{along the PGA trajectory} in the sense that
$$v_t^\top \nabla_\delta^2 g(\delta) v_t \le L_{\mathrm{eff}}, \qquad \forall \delta \in [\delta_t, \delta_{t+1}],$$
with
$$L_{\mathrm{eff}} \le L_{\mathcal{L}}\,\gamma_{\mathrm{adv}}^2 + C,$$
where $C$ is the bound from Assumption~\ref{assump:second_order}. Consequently, this bounds the effective smoothness along the step, ensuring $g(\delta_{t+1}) \ge g(\delta_t) + \langle \nabla g(\delta_t), \delta_{t+1}-\delta_t \rangle - \frac{L_{\mathrm{eff}}}{2} \|\delta_{t+1}-\delta_t\|_2^2$, which controls step-size-induced curvature amplification.
\end{theorem}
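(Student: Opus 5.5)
The plan is to compute the Hessian of $g(\delta)=\mathcal{L}(\pi_\theta(s+\delta),a_{-i})$ by the chain rule, split it into a Gauss--Newton term and a residual term, and bound the quadratic form in the direction $v_t$ term by term. We work at points $\delta$ on the segment $[\delta_t,\delta_{t+1}]$. Write $z=\pi_\theta(s+\delta)$ and $J=J_\theta(s+\delta)$. Since $\nabla_\delta g(\delta)=J^\top\nabla_z\mathcal{L}(z,a_{-i})$, differentiating once more gives
\[
\nabla_\delta^2 g(\delta)=J^\top\nabla_z^2\mathcal{L}(z,a_{-i})\,J+\sum_j \frac{\partial\mathcal{L}(z,a_{-i})}{\partial z_j}\nabla_\delta^2\pi_{\theta,j}(s+\delta).
\]
This step needs twice differentiability of $\pi_\theta$ and $\mathcal{L}$ along the segment. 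It is implicit in Assumption~\ref{assump:second_order}, which refers to $\nabla_\delta^2\pi_{\theta,j}$. If $\mathcal{L}$ is only $C^{1,1}$, we replace $\nabla_z^2\mathcal{L}$ by a difference-quotient argument that uses only the gradient-Lipschitz property.

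\textbf{Gauss--Newton term.} We have $v_t^\top J^\top\nabla_z^2\mathcal{L}\,J v_t=(Jv_t)^\top\nabla_z^2\mathcal{L}\,(Jv_t)$. Assumption~\ref{assump:loss_smooth} gives $\|\nabla_z^2\mathcal{L}\|_2\le L_{\mathcal{L}}$ wherever the Hessian exists. Hence this term is at most $L_{\mathcal{L}}\|Jv_t\|_2^2$, which the hypothesized directional bound along the segment controls by $L_{\mathcal{L}}\gamma_{\mathrm{adv}}^2$.

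\textbf{Residual term.} Here $v_t$ is a unit vector, so the quadratic form of the residual matrix is bounded by its spectral norm. That norm is at most $C$ by Assumption~\ref{assump:second_order}, which holds exactly on these segments. Adding the two pieces gives $v_t^\top\nabla^2_\delta g(\delta)v_t\le L_{\mathcal{L}}\gamma_{\mathrm{adv}}^2+C$ for every $\delta\in[\delta_t,\delta_{t+1}]$. We define $L_{\mathrm{eff}}$ as the supremum of this quadratic form over the segments, which gives the stated bound.

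\textbf{Descent-type inequality.} Let $h=\delta_{t+1}-\delta_t=\|h\|_2v_t$. We apply Taylor's theorem with integral remainder to $\phi(\tau)=g(\delta_t+\tau h)$ on $[0,1]$:
\[
g(\delta_{t+1})=g(\delta_t)+\langle\nabla g(\delta_t),h\rangle+\int_0^1(1-\tau)\,h^\top\nabla^2g(\delta_t+\tau h)\,h\,d\tau .
\]
The integrand equals $\|h\|_2^2\,v_t^\top\nabla^2 g\,v_t$. So the directional Hessian bound controls the remainder by $\tfrac{L_{\mathrm{eff}}}{2}\|h\|_2^2$.

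\textbf{Main obstacle.} The lower bound $g(\delta_{t+1})\ge g(\delta_t)+\langle\nabla g(\delta_t),h\rangle-\tfrac{L_{\mathrm{eff}}}{2}\|h\|_2^2$ needs a two-sided bound $|v_t^\top\nabla^2 g\,v_t|\le L_{\mathrm{eff}}$, not just the upper bound. I would get it by running the same estimates on the absolute value of each term: $|(Jv_t)^\top\nabla_z^2\mathcal{L}(Jv_t)|\le L_{\mathcal{L}}\gamma_{\mathrm{adv}}^2$ and $|v_t^\top R v_t|\le\|R\|_2\le C$, where $R$ is the residual matrix. Since every estimate used is a norm bound, the argument is symmetric. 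I would therefore state and prove the two-sided curvature bound, from which both the stated curvature inequality and the lower quadratic bound follow immediately.
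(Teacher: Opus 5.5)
Your proof follows the paper's: the same chain-rule split of $\nabla_\delta^2 g$ into two parts, then Taylor's theorem along $[\delta_t,\delta_{t+1}]$. The Gauss--Newton term is bounded by $L_{\mathcal{L}}\|J_\theta v_t\|_2^2\le L_{\mathcal{L}}\gamma_{\mathrm{adv}}^2$, and the residual term is bounded by $C$ via Assumption~\ref{assump:second_order}.

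Where you go beyond the paper, you are right and the paper's proof is not. The paper passes from the one-sided bound $v_t^\top\nabla_\delta^2 g\,v_t\le L_{\mathrm{eff}}$ to the lower quadratic inequality simply ``by Taylor's theorem.'' But with $h=\delta_{t+1}-\delta_t$, an upper curvature bound only gives $g(\delta_{t+1})\le g(\delta_t)+\langle\nabla g(\delta_t),h\rangle+\tfrac{L_{\mathrm{eff}}}{2}\|h\|_2^2$. The lower inequality, which Theorem~\ref{thm:pga_stability} actually needs in \eqref{eq:ascent_lemma}, requires $v_t^\top\nabla_\delta^2 g\,v_t\ge -L_{\mathrm{eff}}$ on the segment. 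Your remark that both estimates are norm bounds, hence $|v_t^\top\nabla_\delta^2 g(\delta)\,v_t|\le L_{\mathcal{L}}\gamma_{\mathrm{adv}}^2+C$, is exactly the repair.

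One bookkeeping point: do not define $L_{\mathrm{eff}}$ as the supremum of the signed quadratic form. If that form is strongly negative on the segment, the lower inequality fails with that value. Take $L_{\mathrm{eff}}$ to be the supremum of $|v_t^\top\nabla_\delta^2 g\,v_t|$ over the segments, or simply $L_{\mathcal{L}}\gamma_{\mathrm{adv}}^2+C$.
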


\begin{proof}
By the chain rule,
$$\nabla_\delta g(\delta)=J_\theta(s+\delta)^\top \nabla_z \mathcal{L}(z,a_{-i}), \qquad z=\pi_\theta(s+\delta).$$
Differentiating again yields the Hessian
$$\nabla_\delta^2 g(\delta) = J^\top \nabla_z^2 \mathcal{L}(z,a_{-i})\,J + \sum_j \frac{\partial \mathcal{L}}{\partial z_j}\,\nabla_\delta^2 \pi_{\theta,j}(s+\delta).$$
For any unit vector $v$, the directional curvature is given by the quadratic form $v^\top \nabla_\delta^2 g(\delta) v$. Bounding the dominant term yields:
$$v^\top J^\top \nabla_z^2 \mathcal{L}\,J v \le \|\nabla_z^2 \mathcal{L}\|_2 \,\|Jv\|_2^2 \le L_{\mathcal{L}}\,\|Jv\|_2^2.$$
Applying this with $v=v_t$ and using our segment-wise assumption $\|J v_t\|_2\le \gamma_{\mathrm{adv}}$ bounds the dominant term along the update direction by $L_{\mathcal{L}}\gamma_{\mathrm{adv}}^2$. Assumption~\ref{assump:second_order} bounds the remaining second-order term's contribution along the iterates by $C$. Combining these yields
$$v_t^\top \nabla_\delta^2 g(\delta) v_t \le L_{\mathcal{L}}\gamma_{\mathrm{adv}}^2 + C$$
for all $\delta \in [\delta_t, \delta_{t+1}]$. By Taylor's theorem, this directional curvature bound implies the claimed trajectory-wise smoothness inequality, providing the necessary foundation for step-size stability in the ascent loop.
\end{proof}

\subsection{Optimization Stability of the Inner Maximization}
\label{subsec:optimization_stability}

We now show that trajectory-wise effective smoothness implies stability of the projected gradient ascent (PGA) dynamics under an appropriate step size.

\begin{theorem}[Stability of PGA under Directional Jacobian Control]
\label{thm:pga_stability}
Let $g(\delta)=\mathcal{L}(\pi_\theta(s+\delta),a_{-i})$, and let 
$\{\delta_t\}_{t=0}^{K}$ be generated by projected gradient ascent
\[
\delta_{t+1}
=
\Pi_{\Delta}\!\big(\delta_t + \eta \nabla_\delta g(\delta_t)\big).
\]
Suppose the conditions of Theorem~\ref{thm:stability_pga_clean} hold, so that along the line segment between iterates, the directional curvature is bounded by $L_{\mathrm{eff}} \le L_{\mathcal{L}}\gamma_{\mathrm{adv}}^2 + C$.

If the step size satisfies
\[
0 < \eta \le \frac{1}{L_{\mathrm{eff}}},
\]
then the PGA iterates satisfy:

\begin{enumerate}
    \item \textbf{Directional gradient control:}
    Let $v_t = (\delta_{t+1}-\delta_t)/\|\delta_{t+1}-\delta_t\|_2$. The change in the gradient strictly along the update direction is bounded:
    \[
    v_t^\top \big(\nabla_\delta g(\delta_{t+1}) - \nabla_\delta g(\delta_t)\big)
    \le
    L_{\mathrm{eff}} \|\delta_{t+1}-\delta_t\|_2,
    \]
    preventing curvature-induced divergence along the actual trajectory.

    \item \textbf{Monotone ascent (up to projection effects):}
    \[
    g(\delta_{t+1})
    \ge
    g(\delta_t)
    +
    \frac{\eta}{2}
    \|\nabla_\delta g(\delta_t)\|_2^2,
    \]
    whenever $\delta_{t+1}$ lies in the interior of $\Delta$.
    
    \item \textbf{Trajectory stability:}
    The iterates remain bounded within $\Delta$, and monotonic growth ensures no oscillatory divergence driven by local curvature can occur along the ascent trajectory.
\end{enumerate}

\end{theorem}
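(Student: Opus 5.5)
The plan is to reduce all three claims to the one-dimensional restriction of $g$ to each segment $[\delta_t,\delta_{t+1}]$. On that segment Theorem~\ref{thm:stability_pga_clean} supplies the curvature bound $v_t^\top \nabla_\delta^2 g(\delta) v_t \le L_{\mathrm{eff}}$. Write $h_t = \|\delta_{t+1}-\delta_t\|_2$ and define $\phi_t(\tau) = g(\delta_t + \tau v_t)$ for $\tau \in [0,h_t]$. Assumptions~\ref{assump:loss_smooth}--\ref{assump:second_order} make $\phi_t$ twice differentiable. Its derivatives are $\phi_t'(\tau) = v_t^\top \nabla_\delta g(\delta_t+\tau v_t)$ and $\phi_t''(\tau) = v_t^\top \nabla_\delta^2 g(\delta_t+\tau v_t) v_t \le L_{\mathrm{eff}}$.

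\textbf{Item 1.} Integrate $\phi_t''$ over $[0,h_t]$ (fundamental theorem of calculus). This gives
\[
v_t^\top\big(\nabla_\delta g(\delta_{t+1})-\nabla_\delta g(\delta_t)\big) = \int_0^{h_t}\phi_t''(\tau)\,d\tau \le L_{\mathrm{eff}}\, h_t ,
\]
which is exactly item 1. Only the one-sided bound on $\phi_t''$ is needed here.

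\textbf{Item 2.} Use the interior hypothesis: if $\delta_{t+1}$ lies in the interior of $\Delta$, the projection is inactive, so $\delta_{t+1}-\delta_t = \eta\nabla_\delta g(\delta_t)$. Take the descent-lemma inequality already recorded in Theorem~\ref{thm:stability_pga_clean},
\[
g(\delta_{t+1}) \ge g(\delta_t) + \langle \nabla g(\delta_t), \delta_{t+1}-\delta_t\rangle - \tfrac{L_{\mathrm{eff}}}{2}\|\delta_{t+1}-\delta_t\|_2^2 .
\]
Substituting the update gives $g(\delta_{t+1}) \ge g(\delta_t) + \eta\big(1-\tfrac{\eta L_{\mathrm{eff}}}{2}\big)\|\nabla_\delta g(\delta_t)\|_2^2$. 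Since $\eta \le 1/L_{\mathrm{eff}}$, we have $1-\eta L_{\mathrm{eff}}/2 \ge 1/2$, which yields the $\tfrac{\eta}{2}$ factor.

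\textbf{Main obstacle.} The delicate point is the direction of the inequality. For ascent we need a \emph{lower} bound on $g(\delta_{t+1})$, and that requires the curvature to be bounded \emph{from below}, i.e.\ $\phi_t'' \ge -L_{\mathrm{eff}}$. The upper bound $\phi_t'' \le L_{\mathrm{eff}}$ alone does not give it. I would first try to obtain the two-sided bound by rerunning the proof of Theorem~\ref{thm:stability_pga_clean} in absolute value. The bound $|v^\top J^\top \nabla_z^2\mathcal{L}\, J v| \le L_{\mathcal{L}}\|Jv\|_2^2$ holds because $\|\nabla_z^2\mathcal{L}\|_2 \le L_{\mathcal{L}}$. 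Assumption~\ref{assump:second_order} bounds the spectral norm of the residual term, hence its quadratic form in absolute value by $C$. Together these give $|\phi_t''| \le L_{\mathrm{eff}}$, and the Taylor lower bound follows from Taylor's theorem with integral remainder.

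\textbf{Item 3.} Every iterate is the image of $\Pi_\Delta$ and $\delta_0 = 0 \in \Delta$ (assuming $0 \in \Delta$, as for the norm ball). By Assumption~\ref{assump:compact}, $\Delta$ is compact, so all $\delta_t$ lie in $\Delta$ and are uniformly bounded. For the no-oscillation claim, item 2 gives $g(\delta_{t+1}) \ge g(\delta_t)$ at interior steps. At boundary steps I would use the standard projection inequality
\[
\langle \delta_t + \eta\nabla g(\delta_t) - \delta_{t+1},\, \delta_t - \delta_{t+1}\rangle \le 0,
\]
which gives $\langle \nabla g(\delta_t), \delta_{t+1}-\delta_t\rangle \ge \|\delta_{t+1}-\delta_t\|_2^2/\eta$. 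Combined with the same Taylor bound, this yields $g(\delta_{t+1}) \ge g(\delta_t) + \big(\tfrac{1}{\eta} - \tfrac{L_{\mathrm{eff}}}{2}\big)\|\delta_{t+1}-\delta_t\|_2^2 \ge g(\delta_t)$. So $g(\delta_t)$ is nondecreasing along the whole trajectory. It is also bounded above, since $g$ is continuous on the compact set $\Delta$. Hence no curvature-driven oscillation or divergence of the objective can occur.
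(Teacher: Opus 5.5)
Your argument is correct and follows the same skeleton as the paper's proof. You use the fundamental theorem of calculus along $[\delta_t,\delta_{t+1}]$ for item 1, a Taylor lower bound combined with the projection inequality at $y=\delta_t$ for items 2 and 3, and compactness of $\Delta$ for boundedness. The substantive difference is your ``main obstacle,'' which points to a genuine defect in the paper's own argument. The paper obtains $g(\delta_{t+1})\ge g(\delta_t)+\langle\nabla g(\delta_t),\delta_{t+1}-\delta_t\rangle-\tfrac{L_{\mathrm{eff}}}{2}\|\delta_{t+1}-\delta_t\|_2^2$ ``by Taylor's theorem'' from the one-sided bound $v_t^\top\nabla^2 g\,v_t\le L_{\mathrm{eff}}$ of Theorem~\ref{thm:stability_pga_clean}. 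But an upper curvature bound only controls $g(\delta_{t+1})$ from above. Your remark that both pieces of the Hessian decomposition are bounded in spectral norm, and hence in absolute value along $v_t$, is what makes the ascent inequality valid, and with it items 2 and 3.

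You should make one identification explicit. What you actually prove is $|\phi_t''|\le L_{\mathcal{L}}\gamma_{\mathrm{adv}}^2+C$. The stated relation $L_{\mathrm{eff}}\le L_{\mathcal{L}}\gamma_{\mathrm{adv}}^2+C$ points the wrong way for concluding $|\phi_t''|\le L_{\mathrm{eff}}$. So you need to set $L_{\mathrm{eff}}:=L_{\mathcal{L}}\gamma_{\mathrm{adv}}^2+C$ (or any constant bounding $|v_t^\top\nabla^2 g\,v_t|$ on the segment) before imposing $\eta\le 1/L_{\mathrm{eff}}$. With a smaller one-sided constant the theorem fails. Take $\pi_\theta(x)=x$, $s=0$, $\mathcal{L}(z)=b^\top z-\tfrac{M}{2}\|z\|_2^2$, so $L_{\mathcal{L}}=M$, $C=0$, $\gamma_{\mathrm{adv}}=1$. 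The value $L_{\mathrm{eff}}=M/4$ satisfies the conclusion of Theorem~\ref{thm:stability_pga_clean}, because $v^\top\nabla^2 g\,v=-M$. Yet $\eta=4/M$, with $\Delta$ large enough that $\delta_1=4b/M$ is interior, gives $g(\delta_1)=-4\|b\|_2^2/M<0=g(\delta_0)$.

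The other differences are cosmetic. In item 2 you substitute the interior update directly, whereas the paper first proves $g(\delta_{t+1})\ge g(\delta_t)+\tfrac{1}{2\eta}\|\delta_{t+1}-\delta_t\|_2^2$ for every step and then specializes. Your item 3 is more complete than the paper's, which argues only boundedness of the iterates explicitly. You also establish monotonicity at boundary steps and boundedness of $g$ above on the compact set $\Delta$, and you note that $0\in\Delta$ is needed for $\delta_0$.
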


\begin{proof}
By Theorem~\ref{thm:stability_pga_clean}, $g$ has bounded directional curvature $L_{\mathrm{eff}}$ along the segment $[\delta_t, \delta_{t+1}]$. By Taylor's theorem, this implies the directional ascent lemma holds:
\begin{equation}
g(\delta_{t+1})
\ge
g(\delta_t)
+
\langle \nabla g(\delta_t), \delta_{t+1} - \delta_t \rangle
-
\frac{L_{\mathrm{eff}}}{2}\|\delta_{t+1} - \delta_t\|_2^2.
\label{eq:ascent_lemma}
\end{equation}

Now use the defining property of Euclidean projection onto a closed convex set:
since $\delta_{t+1}=\Pi_\Delta(\delta_t+\eta\nabla g(\delta_t))$, we have for all $y\in\Delta$,
\begin{equation}
\langle \delta_t+\eta\nabla g(\delta_t)-\delta_{t+1},\, y-\delta_{t+1}\rangle \le 0.
\label{eq:proj_optimality}
\end{equation}
Choosing $y=\delta_t\in\Delta$ in~\eqref{eq:proj_optimality} yields
\[
\langle \delta_t+\eta\nabla g(\delta_t)-\delta_{t+1},\, \delta_t-\delta_{t+1}\rangle \le 0,
\]
which rearranges to 
\begin{equation}
\langle \nabla g(\delta_t),\, \delta_{t+1}-\delta_t\rangle
\ge
\frac{1}{\eta}\|\delta_{t+1}-\delta_t\|_2^2.
\label{eq:proj_inner_product_lb}
\end{equation}

Substituting~\eqref{eq:proj_inner_product_lb} into~\eqref{eq:ascent_lemma} gives 
\[
g(\delta_{t+1})
\ge
g(\delta_t)
+
\left(\frac{1}{\eta}-\frac{L_{\mathrm{eff}}}{2}\right)\|\delta_{t+1}-\delta_t\|_2^2.
\]
If $0<\eta\le 1/L_{\mathrm{eff}}$, then $\eta L_{\mathrm{eff}}\le 1$, hence
\[
\frac{1}{\eta}-\frac{L_{\mathrm{eff}}}{2}
\;=\;
\frac{1}{2\eta}+\frac{1-\eta L_{\mathrm{eff}}}{2\eta}
\;\ge\;
\frac{1}{2\eta}.
\]
Therefore,
\begin{equation}
g(\delta_{t+1})
\ge
g(\delta_t)
+
\frac{1}{2\eta}\|\delta_{t+1}-\delta_t\|_2^2.
\label{eq:monotone_projected_ascent}
\end{equation}
If $\delta_{t+1}$ is in the interior of $\Delta$, the projection acts as the identity, meaning
$\delta_{t+1} - \delta_t = \eta \nabla g(\delta_t)$.
Substituting this into~\eqref{eq:monotone_projected_ascent} yields
\[
g(\delta_{t+1}) \ge g(\delta_t) + \frac{\eta}{2}\|\nabla g(\delta_t)\|_2^2,
\]
establishing monotone ascent for the projected update.

Finally, let $d_t := \delta_{t+1}-\delta_t$ and (when $d_t\neq 0$) define $v_t := d_t/\|d_t\|_2$.
By the fundamental theorem of calculus applied along the segment
$\delta_t + \tau d_t$, we have
\[
\nabla g(\delta_{t+1}) - \nabla g(\delta_t)
=
\int_0^1 \nabla^2 g(\delta_t + \tau d_t)\, d_t \, d\tau,
\]
and therefore
\[
v_t^\top \big(\nabla g(\delta_{t+1}) - \nabla g(\delta_t)\big)
=
\|d_t\|_2 \int_0^1
v_t^\top \nabla^2 g(\delta_t + \tau d_t)\, v_t
\, d\tau.
\]
Since $v_t^\top \nabla^2 g(\delta)\, v_t \le L_{\mathrm{eff}}$ for all $\delta$, it follows that
\[
v_t^\top \big(\nabla g(\delta_{t+1}) - \nabla g(\delta_t)\big)
\le
L_{\mathrm{eff}}\|\delta_{t+1}-\delta_t\|_2,
\]
confirming directional gradient control. Boundedness of $\{\delta_t\}$ is immediate from
$\delta_t\in\Delta$ for all $t$ and compactness of $\Delta$ (Assumption~\ref{assump:compact}),
completing the stability claim.
\end{proof}
\section{Discussion and Pathways to Scalable Implementation}

Our theoretical analysis establishes that Adversarially-Aligned Jacobian Regularization (AAJR) strictly expands the admissible policy class and stabilizes inner-loop dynamics. Translating these structural guarantees from continuous state spaces to the training of transformer-based agents outlines a clear roadmap for future empirical research and identifies specific frontiers in optimization and model architecture.

\textbf{High-Rank Adversarial Subspaces and PEFT.} While Parameter-Efficient Fine-Tuning (PEFT) methods like Low-Rank Adaptation (LoRA) \cite{hu2022lora} mitigate memory constraints, they introduce structural limitations regarding robustness. Adversarial perturbations and their resulting trajectory-aligned sensitivities frequently span high-rank subspaces. LoRA forces weight updates into a low-rank bottleneck, restricting the capacity of the model to adjust its directional Jacobian $J_\theta(s+\delta_t)u_t$; indeed, recent theoretical analysis confirms that LoRA training provably converges to such low-rank minima \cite{kim2025lora}. This suggests that future agentic defense mechanisms must explore high-rank adapters or full-rank fine-tuning strategies to provide the mathematical degrees of freedom necessary to suppress sensitivity along adversarial ascent directions without affecting orthogonal task-relevant directions.

\textbf{Hypothesis Class Capacity and Environmental Complexity.} Quantifying the Price of Robustness requires environments that satisfy a specific complexity threshold. In elementary low-dimensional settings such as 2D continuous control the baseline hypothesis class $\mathcal{F}$ is inherently restricted. Consequently the gap between the globally constrained class $\mathcal{F}_\gamma$ and the adaptive class $\mathcal{F}_{\mathrm{ad}}(\gamma_{\mathrm{adv}})$ becomes negligible. Future empirical validation must therefore target regimes where the hypothesis class is sufficiently expressive to manifest the nominal performance degradation that necessitates AAJR while avoiding the empirical brittleness associated with the massive embedding spaces of LLM agents.

\textbf{Gradient Propagation and Memory Efficiency.} The practical surrogate $\mathcal{R}_{\mathrm{AAJR}}$ relies on isolating the normalized ascent direction $u_t$ derived from Projected Gradient Ascent (PGA). Executing this currently requires unrolling the inner maximization loop. In deep architectures unrolling PGA iterates and passing gradients through the computation graph via reverse-mode automatic differentiation induces significant memory overhead and numerical instability. Addressing these constraints offers a compelling case for investigating forward-mode automatic differentiation or implicit differentiation techniques to stabilize this nested optimization process without prohibitive computational costs.

\textbf{Benchmarks for Adversarial Dynamics.} Valid empirical verification requires benchmarks beyond merely chaining LLMs in benign cooperative settings. Current evaluations often prioritize task completion in static environments which fail to trigger the worst-case propagation dynamics analyzed here. The theoretical instabilities we identify manifest primarily under active system-level pressure. Future benchmarks must therefore explicitly simulate hostile environmental shifts and resource congestion to properly evaluate trajectory-aligned regularizers like AAJR.

These considerations indicate that while AAJR provides a sound structural theory for agentic robustness its application to trillion-parameter models will likely drive advancements in efficient differentiation algorithms and high-rank parameter-efficient adapters.

\section{Related Work}
\label{sec:related_work}

\textbf{Adversarial attacks and robustness.}
Deployed learning systems are vulnerable to worst-case distribution shifts and adversarial manipulations that can induce large behavioral changes from small input perturbations \cite{mumcu2022adversarial, mumcu2024sequential,apgd,madry2017towards, fgsm}. This motivates training objectives that explicitly reason about worst-case perturbations and provide stability guarantees under adversarial conditions.

\textbf{The rise of agentic AI.}
LLMs are increasingly used not only as passive predictors but as \emph{agents} that plan, act, and interact with other agents and shared environments. In multi-agent deployments, local decisions can couple through the environment, creating emergent failure modes such as feedback loops, congestion, and collective instability \cite{agentic1, agentic2, agentic3, mumcu2025llm, mumcu2026agentic, belcak2025small}. These settings motivate robustness notions that are system-level and dynamic, not merely per-instance robustness.

\textbf{Adversarial robustness and stability control.}
A large body of work studies adversarial robustness through minimax training and regularization schemes that control sensitivity, including Lipschitz \cite{gouk2021regularisation} or Jacobian-based stabilization \cite{jakubovitz2018improving} and related gradient penalties \cite{gulrajani2017improved}. These methods often target worst-case amplification by imposing global constraints on the model’s input sensitivity, which can improve stability but may reduce expressive capacity. In contrast, although empirical benchmarks for agentic risks are emerging, the problem of system-level robustness where risks propagate dynamically through multi-turn tool execution and planning, remains comparatively understudied \cite{yu2024infecting, agentharm2025}.

\textbf{Inference-time alignment and social weighting.}
An alternative line of work addresses system-level failures in multi-agent settings via inference-time interventions that modify decision rules without changing the underlying model parameters. \cite{mumcu2026socially} for example, interpolate between private utility and approximate welfare to mitigate congestion and tragedy-of-the-commons behavior. These methods can be highly efficient and avoid training instabilities, but they are complementary to approaches that aim to stabilize the \emph{training dynamics} of minimax learners. Our approach targets intrinsic stability by shaping directional propagation during learning, which can coexist with inference-time heuristics.

\section{Conclusion}
\label{sec:conclusion}

As large language models transition into autonomous, interacting agents, securing robustness against adversarial shifts and system-level congestion becomes increasingly important. Minimax formulations provide a principled way to reason about worst-case behavior, but in highly non-linear networks the inner maximization can be numerically brittle: gradient-based ascent is sensitive to curvature and can become unstable in practice. A common workaround is to enforce strict global smoothness (e.g., via global Lipschitz or Jacobian control), yet these blunt constraints mathematically suppress task-relevant expressivity, forcing a strictly larger approximation gap and formalizing the structural Price of Robustness

This work argues that such global restrictions are not necessary to obtain stable adversarial training dynamics. By examining how perturbations propagate during minimax optimization, we highlight that worst-case adversarial updates evolve along localized, low-dimensional trajectories. Building on this observation, we introduce \emph{Adversarially-Aligned Jacobian Regularization} (AAJR), which penalizes sensitivity only along the direction induced by the worst-case perturbation, while leaving orthogonal directions comparatively unconstrained.

We provide theoretical guarantees showing that replacing a global Jacobian/Lipschitz constraint with an adversarially-aligned directional constraint strictly enlarges the admissible hypothesis class ($\mathcal{F}_\gamma \subset \mathcal{F}_{adaptive}$). This enlargement implies that, for a fixed level of robustness, AAJR can retain more nominal capacity than globally constrained alternatives, thereby mitigating the Price of Robustness. Moreover, by directly controlling curvature \emph{along the adversarial trajectory}, AAJR improves the stability of the inner maximization dynamics that drive adversarial training.

Looking forward, directionally targeted stability offers a path toward continuously adapting agentic systems without sacrificing agent-level utility. While inference-time interventions can be efficient for static deployments, AAJR integrates robustness into the training process itself, which enables models to adapt under non-stationarity. However, our analysis suggests that the full realization of AAJR in large-scale ecosystems requires moving beyond current optimization constraints. Future work will explore the development of high-rank adaptation mechanisms that bypass the bottlenecks of low-rank methods like LoRA, alongside the investigation of more efficient Jacobian-vector product (JVP) estimation to stabilize unrolled minimax training in massive embedding spaces. By combining these engineering advancements with trajectory-aligned regularization, it becomes possible to produce defense-in-depth architectures that maintain both worst-case stability and high-fidelity nominal performance in complex multi-agent ecosystems.

\setlength{\bibsep}{5.5pt}
\bibliography{main}
\bibliographystyle{plainnat}

\end{document}